\DeclareFontFamily{OT1}{pzc}{}
\DeclareFontShape{OT1}{pzc}{m}{it}{<-> s * [1.10] pzcmi7t}{}
\DeclareMathAlphabet{\mathpzc}{OT1}{pzc}{m}{it}
\newtheorem{theorem}{Theorem}[section]
\newtheorem{lemma}[theorem]{Lemma}
\providecommand{\N}{\mathbb{N}}
\providecommand{\R}{\mathbb{R}}
\providecommand{\SO}{\mathbf{SO}}
\providecommand{\SE}{\mathbf{SE}}
\providecommand{\SOT}{\mathbf{SOT}}
\providecommand{\SIM}{\mathbf{SIM}}
\providecommand{\VSLAM}{\mathbf{VSLAM}}
\providecommand{\so}{\mathfrak{so}}
\providecommand{\se}{\mathfrak{se}}
\providecommand{\sot}{\mathfrak{sot}}
\providecommand{\vslam}{\mathfrak{vslam}}
\providecommand{\Sph}{\mathrm{S}}
\providecommand{\RP}{\R\mathbb{P}}
\providecommand{\calM}{\mathcal{M}}
\providecommand{\calN}{\mathcal{N}}
\providecommand{\totT}{\mathcal{T}}
\providecommand{\vecV}{\mathbb{V}}
\providecommand{\id}{\mathrm{id}} 
\providecommand{\Lyap}{\mathcal{L}} 
\providecommand{\td}{\mathrm{d}}
\providecommand{\tD}{\mathrm{D}}
\providecommand{\mr}[1]{{#1}^\circ} 
\providecommand{\ob}[1]{\overline{#1}} 
\providecommand{\scirc}{%
    \hbox{\fontfamily{\rmdefault}\fontsize{0.4\dimexpr(\f@size pt)}{0}\selectfont{\raisebox{-0.52ex}[0ex][-0.52ex]{$\circ$}}}}
\DeclareRobustCommand{\mathcirc}{\accentset{\scirc}}
\providecommand{\obb}[1]{\mathrlap{\overline{#1}}\mathcirc{#1}}
\mathchardef\mhyphen="2D
\providecommand{\etal}{\textit{et al.}~}
\renewcommand{\mr}[1]{#1^\circ}
\begin{document}

\newcommand{\publicationdetails}
{
  \copyrightNoticeIEEE{2019}
  P. v. Goor, R. Mahony, T. Hamel and J. Trumpf, "A Geometric Observer Design for Visual Localisation and Mapping," \textit{2019 IEEE 58th Conference on Decision and Control (CDC)}, Nice, France, 2019, pp. 2543-2549,
  \DOILink{https://ieeexplore-ieee-org.virtual.anu.edu.au/abstract/document/9029435}{10.1109/CDC40024.2019.9029435}
}
\newcommand{\publicationversion}
{Author accepted version}

\title{A Geometric Observer Design for Visual Localisation and Mapping}
\headertitle{A Geometric Observer Design for Visual Localisation and Mapping}

\author{
\href{https://orcid.org/0000-0003-4391-7014}{\includegraphics[scale=0.06]{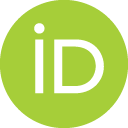}\hspace{1mm}
Pieter van Goor} \\
Department of Electrical, Energy and Materials Engineering\\
Australian National University\\
  ACT, 2601, Australia \\
\texttt{Pieter.vanGoor@anu.edu.au}
\\ \And 
\href{https://orcid.org/0000-0002-7803-2868}{\includegraphics[scale=0.06]{orcid.png}\hspace{1mm}
Robert Mahony} \\
Department of Electrical, Energy and Materials Engineering\\
Australian National University\\
  ACT, 2601, Australia \\
\texttt{Robert.Mahony@anu.edu.au}
\\ \And	
\href{https://orcid.org/0000-0002-7779-1264}{\includegraphics[scale=0.06]{orcid.png}\hspace{1mm}
Tarek Hamel} \\
I3S (University C\^ote d'Azur, CNRS, Sophia Antipolis)\\
and Insitut Universitaire de France\\
\texttt{THamel@i3s.unice.fr}
\\ \And	
\href{https://orcid.org/0000-0002-5881-1063}{\includegraphics[scale=0.06]{orcid.png}\hspace{1mm}
Jochen Trumpf} \\
Department of Electrical, Energy and Materials Engineering\\
Australian National University\\
  ACT, 2601, Australia \\
\texttt{Jochen.Trumpf@anu.edu.au}
}


%
%

\maketitle

\begin{abstract}
This paper builds on recent work on Simultaneous Localisation and Mapping (SLAM) in the non-linear observer community, by framing the visual localisation and mapping problem as a continuous-time equivariant observer design problem on the symmetry group of a kinematic system.
The state-space is a quotient of the robot pose expressed on $\SE(3)$ and multiple copies of real projective space, used to represent both points in space and bearings in a single unified framework.
An observer with decoupled Riccati-gains for each landmark is derived and we show that its error system is almost globally asymptotically stable and exponentially stable in-the-large.
\end{abstract}



\section{Introduction}\label{sec:intro}

Simultaneous Localisation and Mapping (SLAM) is a well-known problem in mobile robotics and has been an active area of research for the last 30 years \cite{2015_Manuel_vslam}.
Visual localisation and mapping refers to the particular case of the SLAM problem where the only exteroceptive sensors available are cameras.
The visual localisation  and mapping problem, and particularly the case where only a single monocular camera is available, continues to be of substantial interest due to the low cost and low weight, as well as the ubiquity of single camera systems \cite{2015_Manuel_vslam}.
While visual localisation and mapping is an established research topic with a rich history \cite{2016_Cadena_TRO}, it remains an active research topic, especially in the area of low-cost light-weight embedded systems
\cite{2018_Delmerico_icra}.
State-of-the-art filters and observers approach the SLAM problem through linearisation, and do not deal well with poor initial estimation or choice of linearisation point \cite{2016_Cadena_TRO}.
Additionally, these methods suffer from high computational complexity and poor scalability \cite{2015_Manuel_vslam, 2012_Strasdat_CVIU}.

Both the SLAM and visual localisation and mapping problems have attracted interest recently in the non-linear observer community.
Approaches to these problems have emerged from earlier work on attitude estimation \cite{2008_Mahony_tac,2008_Bonnabel_TAC} and pose estimation \cite{2009_Baldwin_icra,2010_Vasconcelos.SCL,RM_2011_Hua_cdc}.
Bonnabel \etal \cite{2016_Barrau_arxive} exploited a novel Lie group to design an invariant Kalman Filter for the SLAM problem.
Parallel work by Mahony \etal \cite{2017_Mahony_cdc} developed the same Lie group and proposed a quotient manifold structure for the state-space of the SLAM problem.
Work by Zlotnik \etal \cite{2018_forbes_TAC} derives a geometrically motivated observer for the SLAM problem that includes estimation of bias in linear and angular velocity inputs.
For the visual localisation  and mapping problem, where only bearing measurements are available, Lourenco \etal \cite{2016_LouGueBatOliSil,2018_Lourenco_RAS} proposed an observer with a globally exponentially stable error system using depths of landmarks as separate components of the observer.
Grabe \etal \cite{2015_grabe_IJRR} derived a non-linear observer for the case where a significant number of the bearings measured are of coplanar landmarks by using the instantaneous homography constraint.
Bjorne \etal \cite{2017_bjorne_fusion} uses an attitude heading reference system (AHRS) to determine the orientation of the robot, and then solves the SLAM problem using a linear Kalman filter.
A similar approach to the visual localisation and mapping case is undertaken in \cite{2017_LebHamMahSam}.
Hamel \etal have also introduced a Riccati observer \cite{2018_Hamel_TAC} for the case where the orientation of the robot is known.

In this paper we present a novel non-linear geometric observer for the visual localisation and mapping problem.
The approach extends the SLAM manifold presented in \cite{2017_Mahony_cdc} to include bearings (such as magnetometer or gravity measurements) and landmark points in the same formulation by exploiting the structure of the real-projective space $\RP^3$ and homogeneous coordinates for bound and free vectors.
The proposed $\RP^3$ state-space also allows modelling of visual features as a simple linear projection of $\RP^3$ onto $\RP^2$.
A novel Lie group termed the $\VSLAM_n(3)$ group is introduced and shown to be a symmetry on the measurement function of the visual localisation and mapping problem.
The proposed observer uses decoupled gain matrices for each landmark point that satisfy a simple Riccati equation.
As a consequence of decoupling the Riccati observer for each landmark, the computational complexity of our approach is only $\mathcal{O}(n)$.
Finally, the innovation on the pose of the robot is determined through finding the minimum of a novel cost function on the tangent space of $\RP^3$, and is based on the static environment assumption common in SLAM algorithms.
The resulting observer is shown to have an error system that is almost globally asymptotically stable (the basin of attraction excludes a set of measure zero) and exponentially stable in-the-large (exponentially stable on any compact set contained in the basin of attraction).

This paper consists of five sections alongside the introduction and conclusion.
Section \ref{sec:preliminaries} introduces key notation and identities, and provides an in-depth explanation of the application of $\RP^3$ to representing points and bearings in 3d space.
In Section \ref{sec:problem-formulation}, we formulate the kinematics, state-space and output of the visual localisation  and mapping system, and in Section \ref{sec:symmetry} we introduce the new Lie group $\VSLAM_n(3)$ that acts on the state-space.
In Section \ref{sec:observer} we derive a non-linear observer on the Lie group, and in Section \ref{sec:simulation} we provide the results of a simulation.
The experimental results are designed to verify the theory developed throughout the paper, not to provide a comprehensive evaluation of performance.

\section{Preliminaries} \label{sec:preliminaries}
\subsection{Notation}
The special orthogonal group and special Euclidean group are denoted $\SO(3)$ and $\SE(3)$ respecively, with Lie algebras $\so(3)$ and $\se(3)$.
For any $\Omega = (\Omega_1, \Omega_2, \Omega_3) \in \R^3$, the corresponding skew-symmetric matrix is denoted by
\begin{align*}
\Omega^\times := \left( \begin{matrix}
0 & -\Omega_3 & \Omega_2 \\
\Omega_3 & 0 & -\Omega_1 \\
-\Omega_2 & \Omega_1 & 0
\end{matrix} \right) \in \so(3).
\end{align*}
This matrix has the property that, for any $v \in \R^3$, $\Omega^\times v = \Omega \times v$ where $\Omega \times v$ is the vector (cross) product between $\Omega$ and $v$.

Consider a matrix $P \in \SE(3)$.
The notations $R_P \in \SO(3)$ and $x_P \in \R^3$ are used to represent the rotation and translation components of $P$ respectively, and $P$ may be written as
\begin{align*}
P = \begin{pmatrix}
R_P & x_P \\ 0 & 1
\end{pmatrix}.
\end{align*}
Likewise, for a matrix $U \in \se(3)$, the notations $\Omega_U \in \so(3)$ and $V_U \in \R^3$ represent the rotational and translational velocity components of $U$ respectively, and $U$ may be written as
\begin{align*}
U = \begin{pmatrix}
\Omega_U & V_U \\ 0 & 1
\end{pmatrix}.
\end{align*}

For any $y \in \R^3\setminus \{0\}$ the \textit{projector} $\Pi_y$ is given by
\begin{align*}
\Pi_y := I_3 - \frac{y y^\top}{|y|^2}.
\end{align*}
The operator $\Pi_y$ projects vectors onto the subspace of $\R^3$ orthogonal to $y$.
The projector and the skew-symmetric matrix are related by
\begin{align} \label{eq:projector-skew-identity}
\Pi_y = - \frac{y^\times y^\times }{|y|^2},
\end{align}
for any $y \in \R^3 \setminus \{0 \}$.
For any $\bar{y} \in \R^4\setminus \{0\}$ the projector is similarly defined as
\begin{align*}
\ob{\Pi}_{\bar{y}} := I_4 - \frac{\bar{y} \bar{y}^\top}{|\bar{y}|^2}.
\end{align*}

\subsection{Real Projective Space}
For $x \in \R^4 \setminus \{0\}$, define the set of equivalence classes
\begin{align*}
[x] := \left\{ a x \ \vline \ a \in \R \setminus \{0\} \right\}.
\end{align*}
Given two elements $x,y \in \R^4 \setminus \{0\}$, the notation $x \simeq y$ indicates $x = ay$ for some $a \in \R \setminus \{0\}$.
The 3-dimensional real-projective space $\RP^3 = \{[x] \ \vline\  x\in \R^4 \setminus\{0\}\}$ is a smooth quotient manifold \cite{RM_2008_Absil.etal}.
For any full rank matrix $A \in \R^{4 \times 4}$, the operation
\begin{align}
A[x] := [Ax]
\label{eq:Ax_RP3}
\end{align}
is well-defined.

Let $x \in \R^4 \setminus \{ 0 \}$, and define an horizontal space $H_x = \{ v \in \R^4 \; \vline \; v^\top x = 0\}$.
Define an equivalence relationship $(x, v) \equiv (ax , a v)$ for $a \in \R \setminus \{0\}$ between elements of $H_x$ and $H_{ax}$.
A tangent vector $v_{[x]} \in T_{[x]} \RP^3$ is the equivalence class
$[x, v] = \{ (ax,av) \; \vline \; v \in H_x \}$. 

For any $[x] \in \RP^3$, define the projector
\[
\ob{\Pi}_{[x]} := \ob{\Pi}_{x}.
\]
To see this is well-defined, let $a \in \R$ be a non-zero scalar, and check
\begin{align*}
\ob{\Pi}_{ax} = I_4 - \frac{(ax) (ax)^\top}{\vert (ax) \vert^2} = I_4 - \frac{a^2}{a^2} \frac{x x^\top}{\vert x \vert^2} = \ob{\Pi}_x.
\end{align*}
Analogously, the projector $\Pi_{[y]} := \Pi_y$ is well-defined for any $y \in \RP^2$.

Let $p \in \R^3$ be a vector representing the position of a point in space.
Define the homogeneous coordinates
\[
\ob{p} := \left( \begin{matrix} p \\ 1 \end{matrix} \right)
\]
as an embedding $\R^3 \hookrightarrow \R^4$ and refer to such points $\ob{p}$ as \textit{bound vectors} with foot at the origin of the reference frame and tip at the $\R^3$ point it represents.
Let $b \in \Sph^2 = \{  b \in \R^3 \; |\; |b| = 1\}$ be a vector representing a bearing or direction and define homogeneous coordinates
\[
\obb{b} = \left( \begin{matrix} b \\ 0 \end{matrix} \right)
\]
as an embedding $\Sph^2 \hookrightarrow \R^4$.  We term $\obb{b}$ a \textit{free vector}.
Using these embeddings it is possible to define a map $\alpha : \R^3 \sqcup \Sph^2 \rightarrow \RP^3$
\begin{align*}
\alpha(p) & :=  [\ob{p} ],  \quad\quad p \in \R^3,  \\
\alpha(b) & := [\obb{b} ], \quad\quad  b \in \Sph^2.
\end{align*}
A \textit{point-type} element of $\RP^3$ is any element in the subset $\{ [x] \ | \ x_4 \neq 0 \}$.
A \textit{bearing-type} element of $\RP^3$ is any element in the subset $\{ [x] \ | \ x_4 = 0 \}$.
A full inverse of $\alpha$ is not uniquely defined due to the sign ambiguity of elements of $\RP^3$.
However, it is possible to define a unique map $\gamma : \RP^3 \rightarrow \R^3 \sqcup \RP^2$ by
\begin{align} \label{eq:inverse-rp3-map-gamma}
\gamma([x]) & := \left\{
\begin{array}{ll}
x_{1:3}{/} x_4 \in \R^3, & \text{ if } x_4 \not= 0 \\
\left[ x_{1:3} \right] \in \RP^2, & \text{ if } x_4 = 0
\end{array}
\right. ,
\end{align}
where $x_{1:3} \in \R^3$ denotes the first three elements of $x$ and $[x_{1:3}] = \{ a x_{1:3} \; |\; a \in \R \setminus \{0\} \}$, analogous to the $\R^4$ definition.
Define a projection $\beta : \R^3 \sqcup \Sph^2 \to \R^3 \sqcup \RP^2$ by
\begin{align*}
\beta (x)  :=
\left\{ \begin{array}{ll}
x \in \R^3, & \text{ if } x \in \R^3 \\
\left[ x \right] \in \RP^2 & \text{ if } x \in \Sph^2
\end{array}
\right. .
\end{align*}
The following commutative diagram holds
\[
\xymatrix{
\R^3 \sqcup \Sph^2   \ar@{->}[rd]^{\alpha}  \ar@{->}[d]_{\beta}&
 \\
\R^3 \sqcup \RP^2  & \RP^3 \ar@{->}[l]^{\quad \gamma}
}
\]
The map $\gamma$ is smooth under restriction to either point-type elements or bearing-type elements of $\RP^3$.
Although $\gamma$ is unable to reconstruct the full direction vector $b$ from a bearing-type $\RP^3$ element, the unsigned direction $[b]$ is sufficient for the observer construction that we undertake in the sequel.

\section{Problem Formulation} \label{sec:problem-formulation}

\subsection{VSLAM Total Space}
The formulation of the total space for the VSLAM problem is an extension of the formulation in \cite{2017_Mahony_cdc} to include not only points in 3D space but also bearings through their $\RP^3$ representations.

\textit{Raw coordinates} for the VSLAM problem can be defined by fixing an arbitrary reference frame $\{0\}$.
Let $P \in \SE(3)$ and $\eta_i \in \RP^3$ represent the robot pose and landmark coordinates respectively, defined with respect to $\{0\}$.
Note that each $\eta_i \in \RP^3$ is either point-type or bearing-type depending on whether its last entry is zero.
The total space of the VSLAM problem is the product space
\begin{align*}
\totT_n(3) = \SE(3) \times \RP^3 \times \cdots \times \RP^3,
\end{align*}
with elements
\begin{align*}
(P, \eta_1,..., \eta_n).
\end{align*}
The notation $(P, \eta_i) \equiv (P, \eta_1,..., \eta_n)$ is used to simplify notation in the sequel.

Given $(P, \eta_i) \in \totT_n(3)$, recalling \eqref{eq:Ax_RP3} define
\begin{align*}
\lfloor P, \eta_i \rfloor  := \left\{ (S^{-1}P, S^{-1} \eta_i) \ | \ S \in \SE(3) \right\}.
\end{align*}
Given two elements $(P, \eta_i), (Q, \theta_i) \in \totT_n(3)$, the notation $(P, \eta_i) \simeq (Q, \theta_i)$ means that $(P, \eta_i) = (S^{-1} Q, S^{-1} \theta_i)$ for some $S \in \SE(3)$.
The SLAM manifold is the set
\begin{align*}
\calM_n(3) = \left\{ \lfloor P, \eta_i \rfloor \ | \ (P, \eta_i) \in \totT_n(3) \right\},
\end{align*}
with quotient manifold structure \cite{2017_Mahony_cdc}.

An expression is well-defined on the SLAM manifold $\calM_n(3)$ if it is invariant to the action of a rigid-body transformation of the reference frame.
An important example is $(P, \eta_i) \mapsto P^{-1} \eta_i$.
Given any $S \in \SE(3)$, one has
\begin{align} \label{eq:welldef-of-body-coordinates}
(S^{-1} P, S^{-1} \eta_i) \mapsto (S^{-1}P)^{-1} S^{-1} \eta_i = P^{-1} S S^{-1} \eta_i = P^{-1} \eta_i.
\end{align}

\subsection{VSLAM Kinematics}
The assumption will be made that the robot is moving through a static environment.
Consider the velocity input space $\vecV = \se(3)$.
The kinematics of the VSLAM system are given by the function
\begin{align} \label{eq:input_function_f}
f:& \totT_n(3) \times \vecV \to T\totT_n(3), \notag \\
&((P,\eta_i) , U) \mapsto (PU,0).
\end{align}

\subsection{System Output}
The physical measurements taken by our robot in the VSLAM system are the bearings of landmarks.
Let $\eta'_i = P^{-1} \eta_i$ be the body-fixed frame coordinates of a landmark $\eta_i \neq \alpha(x_P)$.
Using the basic pinhole camera model as described in \cite{2003_hartley_multiview} with invertible $3\times 3$ camera matrix $K$, the measurement of $\eta'_i$ taken by the camera is $\begin{pmatrix} K & \mathbf{0}_{3 \times 1} \end{pmatrix} \eta'_i$.
Assuming the camera is calibrated matrix $K$, it is easy to recover the element
\begin{align*}
K^{-1} \begin{pmatrix} K & \mathbf{0}_{3 \times 1} \end{pmatrix} \eta'_i &= \begin{pmatrix} I_3 & \mathbf{0}_{3 \times 1} \end{pmatrix} \eta'_i
\end{align*}
although the scale of this element is arbitrary and cannot be known.
If $\eta_i$ is a bearing-type element, then $\theta_{i,4} = 0$ and no information is lost through the camera projection.
However, if $\eta_i$ is a point-type element, then the scale of the vector is not recoverable.
In this formulation the sign of the landmark measurement (representing whether the landmark is in front of or
behind the camera) is ambiguous, but this is sufficient for the observer design undertaken in Section \ref{sec:observer}.
The choice of bearing-type or point-type for a particular landmark $\eta_i$ is a modelling choice based on the requirements for the resulting map of the environment.

The output space of the VSLAM system is defined as
\begin{align*}
\calN_n(3) := \RP^2 \times \cdots \times \RP^2.
\end{align*}
The output function of the VSLAM system is defined as
\begin{align} \label{eq:output_function_h}
h&:\totT_n(3) \to \calN_n(3), \notag \\
&\phantom{:} (P, \eta_i) \mapsto \begin{pmatrix}
I_3 & \bf{0}
\end{pmatrix} P^{-1} \eta_i.
\end{align}
The output function transforms each $\eta_i$ into body-fixed frame coordinates, and projects the result into $\RP^2$, representing bearing-type of point-type landmark measurements with a calibrated pinhole camera.

\section{Symmetry of the VSLAM Problem} \label{sec:symmetry}
\subsection{Symmetry of the Total Space}

We introduce a group we term Scaled Orthogonal Transformations $\SOT(n)$, a subgroup of the group of similarity transforms on $\R^n$.

\begin{lemma} \label{def:SOT_group}
For any $n \in \N$, the set
\begin{align*}
\SOT(n) = \left\{ \left( \begin{matrix}
R & 0 \\ 0 & a
\end{matrix} \right) \ \vline \ R \in \SO(n), a \in \R \setminus \{ 0 \}  \right\},
\end{align*}
with matrix multiplication is a subgroup of $\SIM(n)$.
\end{lemma}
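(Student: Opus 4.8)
The plan is to verify the two assertions packaged in the statement in turn: first that $\SOT(n)$ is a subset of $\SIM(n)$, and then that it is closed under the group operations, so that it is in fact a subgroup. For the second part I would invoke the one-step subgroup test: $\SOT(n)$ is nonempty, since it contains the identity $\begin{pmatrix} I_n & 0 \\ 0 & 1\end{pmatrix}$ (take $R = I_n$, $a = 1$), so it suffices to show $g h^{-1} \in \SOT(n)$ for all $g, h \in \SOT(n)$.

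For the containment, recall that $\SIM(n)$ is the group of similarity transforms $x \mapsto s Q x + t$ of $\R^n$, with $s > 0$, $Q \in O(n)$ and $t \in \R^n$, acting on homogeneous coordinates through the matrices $\begin{pmatrix} sQ & t \\ 0 & 1\end{pmatrix}$ (identified up to an overall nonzero scale, consistent with the projective setting of Section \ref{sec:preliminaries}). Given an arbitrary element of $\SOT(n)$, I would factor out the scalar $a$ and write
\begin{align*}
\begin{pmatrix} R & 0 \\ 0 & a \end{pmatrix} = a \begin{pmatrix} \tfrac{1}{a} R & 0 \\ 0 & 1 \end{pmatrix}, \qquad \tfrac{1}{a} R = \tfrac{1}{|a|}\, Q, \quad Q := \tfrac{|a|}{a} R.
\end{align*}
Since $|a|/a = \pm 1$ and $R \in \SO(n) \subset O(n)$, we have $Q \in O(n)$; together with $s := 1/|a| > 0$ and $t = 0$ this exhibits the element (up to the scalar $a$) as a similarity fixing the origin, establishing $\SOT(n) \subseteq \SIM(n)$.

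The closure computation is then routine, and it is the block-diagonal form that does all the work. Multiplying two elements gives
\begin{align*}
\begin{pmatrix} R_1 & 0 \\ 0 & a_1 \end{pmatrix} \begin{pmatrix} R_2 & 0 \\ 0 & a_2 \end{pmatrix} = \begin{pmatrix} R_1 R_2 & 0 \\ 0 & a_1 a_2 \end{pmatrix},
\end{align*}
and since $\SO(n)$ is closed under products and $\R \setminus \{0\}$ is a group under multiplication, the result lies in $\SOT(n)$; likewise the inverse $\begin{pmatrix} R & 0 \\ 0 & a \end{pmatrix}^{-1} = \begin{pmatrix} R^\top & 0 \\ 0 & 1/a \end{pmatrix}$ has $R^\top \in \SO(n)$ and $1/a \in \R \setminus \{0\}$, so it too lies in $\SOT(n)$. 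These observations amount to the statement that $\SOT(n) \cong \SO(n) \times (\R \setminus \{0\})$ as groups via the block-diagonal embedding, which makes the subgroup property transparent.

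I expect the only genuinely delicate point to be the containment step rather than the closure step: one must reconcile the nonstandard bottom-right entry $a$ (in place of the usual $1$ in a homogeneous similarity matrix) and handle the sign of $a$, which is exactly why the argument passes through $O(n)$ rather than $\SO(n)$ and relies on identifying representative matrices up to an overall scale. Everything else reduces to a direct block-matrix calculation.
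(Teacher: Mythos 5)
Your proof is correct and follows essentially the same route as the paper's: the paper likewise observes that $\SOT(n)$ is the direct product $\SO(n) \times (\R\setminus\{0\})$ under block multiplication and establishes containment in $\SIM(n)$ by considering the action $x \mapsto \frac{1}{a}Rx$, which is exactly your factoring-out-the-scalar step. The only difference is that you spell out the detail the paper leaves implicit, namely handling the sign of $a$ by passing through $O(n)$ and identifying homogeneous matrices up to overall scale.
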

\begin{proof}
Assigning matrix multiplication as the group action it is clear that $\SOT(n)$ is the direct product of $\SO(3) \times \R_*$, where $R_*$ is the Lie group formed by assigning multiplication as the operation on $\R \setminus \{ 0 \}$.
It is straightforward to verify that $\SOT(n)$ is a subgroup of $\SIM(n)$ by considering the action $x\mapsto \frac{1}{a} R x$ for $x\in \R^n$.
\end{proof}

The action of $\SOT(3)$ on landmarks is a rotation combined with a scaling for point-type landmarks. Recalling \eqref{eq:Ax_RP3} and taking advantage of the equivalence class structure of $\RP^3$,
\begin{align*}
\begin{pmatrix}
R & 0 \\ 0 & a
\end{pmatrix}
\begin{bmatrix} p \\ 1 \end{bmatrix}
=
\begin{bmatrix} R p \\ a \end{bmatrix}
=
\begin{bmatrix} \frac{1}{a} R p \\ 1 \end{bmatrix}, \hspace{1cm}
\begin{pmatrix}
R & 0 \\ 0 & a
\end{pmatrix}
\begin{bmatrix} b \\ 0 \end{bmatrix}
=
\begin{bmatrix} R b \\ 0 \end{bmatrix} .
\end{align*}
There are exactly three orbits of $\SOT(3)$ acting on $\RP^3$, defined by
\begin{align} \label{eq:RP3-orbits}
\RP^3_p &:= \left\{ [x] \in \RP^3 \ \vert \ x_4 \neq 0, \ [x] \neq  \alpha(\bf{0}) \right\}, \notag \\
\RP^3_b &:= \left\{ [x] \in \RP^3 \ \vert \ x_4 = 0 \right\}, \notag \\
\RP^3_0 &:= \left\{  \alpha(\bf{0}) = \bf{e}_4 \right\},
\end{align}
where $x_4$ refers to the fourth coordinate of $x$.

The symmetry group $\VSLAM_n(3)$ for the VSLAM problem with $n$ landmarks in 3 dimensions is defined as a Lie group
\begin{align*}
\VSLAM_n(3) = SE(3) \times \SOT(3) \times \cdots \times \SOT(3),
\end{align*}
with product Lie group structure.
The associated Lie algebra is denoted $\vslam_n(3)$.

\begin{lemma}
The mapping $\Upsilon : \VSLAM_n(3) \times \totT_n(3) \to \totT_n(3)$ defined by
\begin{align} \label{eq:group_action_upsilon}
\Upsilon((A,Q_i), (P, \eta_i)) = (PA, PAQ_i^{-1}P^{-1}\eta_i),
\end{align}
where the right-hand expression depends on definition \eqref{eq:Ax_RP3},
is a right group action of $\VSLAM_n(3)$ on $\totT_n(3)$.
\end{lemma}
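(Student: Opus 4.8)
The plan is to verify the two defining axioms of a right group action, namely that the identity of $\VSLAM_n(3)$ acts trivially and that composition reverses order, i.e. $\Upsilon(g_2, \Upsilon(g_1, (P,\eta_i))) = \Upsilon(g_1 g_2, (P,\eta_i))$ for all $g_1, g_2 \in \VSLAM_n(3)$. Before addressing either axiom I would first confirm that $\Upsilon$ is well-defined as a map into $\totT_n(3)$: the first component $PA$ lies in $\SE(3)$ because $\SE(3)$ is closed under multiplication, while the second component is obtained by acting on each $\eta_i \in \RP^3$ through the matrix $PAQ_i^{-1}P^{-1}$, which is a product of invertible $4\times 4$ matrices and hence full rank, so that \eqref{eq:Ax_RP3} guarantees the result is independent of the choice of representative for $[\eta_i]$.

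For the identity axiom, writing the identity of $\VSLAM_n(3)$ as $(I_4, I_4, \ldots, I_4)$, a direct substitution yields $\Upsilon((I_4, I_4), (P,\eta_i)) = (PI_4, PI_4 I_4^{-1}P^{-1}\eta_i) = (P, \eta_i)$, as required.

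The heart of the proof is the composition axiom. I would set $g_1 = (A_1, Q_{1,i})$ and $g_2 = (A_2, Q_{2,i})$, compute the intermediate element $(\tilde{P}, \tilde{\eta}_i) = \Upsilon(g_1, (P, \eta_i)) = (PA_1,\, PA_1 Q_{1,i}^{-1}P^{-1}\eta_i)$, and then apply $g_2$. The first component is immediate, since $\tilde{P} A_2 = PA_1 A_2$. For the second component I would expand
\[
\tilde{P} A_2 Q_{2,i}^{-1}\tilde{P}^{-1}\tilde{\eta}_i = PA_1 A_2 Q_{2,i}^{-1}(PA_1)^{-1}\,PA_1 Q_{1,i}^{-1}P^{-1}\eta_i,
\]
and the key step is that the conjugating factors telescope: the cancellation $(PA_1)^{-1}PA_1 = I_4$ collapses the expression to $PA_1 A_2 Q_{2,i}^{-1}Q_{1,i}^{-1}P^{-1}\eta_i$. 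Since $\VSLAM_n(3)$ carries the product group structure, one has $g_1 g_2 = (A_1 A_2,\, Q_{1,i}Q_{2,i})$, and evaluating $\Upsilon(g_1 g_2, (P,\eta_i))$ directly gives second component $PA_1 A_2 (Q_{1,i}Q_{2,i})^{-1}P^{-1}\eta_i = PA_1 A_2 Q_{2,i}^{-1}Q_{1,i}^{-1}P^{-1}\eta_i$, which matches.

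The step I expect to require the most care is precisely this telescoping: the reversal $Q_{2,i}^{-1}Q_{1,i}^{-1} = (Q_{1,i}Q_{2,i})^{-1}$ is exactly what forces the mapping to be a right rather than a left action, and I would want to confirm that the inner conjugation by $P$ and by $A_1$ cancels cleanly so that no stray factors survive. Everything else is routine bookkeeping within the matrix groups, with well-definedness on $\RP^3$ dispatched once and for all by \eqref{eq:Ax_RP3}.
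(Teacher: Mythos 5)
Your proposal is correct and follows essentially the same route as the paper: both verify the identity axiom by direct substitution and establish the composition axiom by expanding the nested action, cancelling the telescoping factor $(PA_1)^{-1}PA_1 = I_4$, and matching the result against the componentwise product in $\VSLAM_n(3)$ via $Q_{2,i}^{-1}Q_{1,i}^{-1} = (Q_{1,i}Q_{2,i})^{-1}$. The only difference is cosmetic (your labelling of $g_1, g_2$ is swapped relative to the paper's), plus your explicit well-definedness check via \eqref{eq:Ax_RP3}, which the paper leaves implicit.
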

\begin{proof}
Trivially, $\Upsilon((I_4, I_4), (P, \eta_i)) = (P, \eta_i)$ for any $(P, \eta_i) \in \totT_n(3)$.
Let $(A_1,Q_{i,1}), (A_2,Q_{i,2}) \in \VSLAM_n(3)$ and $(P, \eta_i)$ be arbitrary. Then
\begin{align*}
\Upsilon(&(A_1, Q_{i,1}), \Upsilon((A_2,Q_{i,2}), (P,\eta_i))) \\
&= \Upsilon((A_1,Q_{i,1}), (PA_2, PA_2 Q_{i,2}^{-1}P^{-1}\eta_i)), \\
&= (PA_2A_1, PA_2A_1 Q_{i,1}^{-1} (PA_2)^{-1}PA_2 Q_{i,2}^{-1}P^{-1}\eta_i), \\
&= (P(A_2A_1), P(A_2A_1) (Q_{i,2} Q_{i,1})^{-1}P^{-1}\eta_i), \\
&= \Upsilon((A_2, Q_{i,2}) \cdot (A_1,Q_{i,1}), (P, \eta_i)).
\end{align*}
This demonstrates that $\Upsilon$ is a right action as required.
\end{proof}

Recall the orbits of $\SOT(3)$ described in \eqref{eq:RP3-orbits}.
Given a configuration $(\mr{P}, \mr{\eta}_i) \in \totT_n(3)$, let $(P, \eta_i) = \Upsilon((A,Q_i), (\mr{P}, \mr{\eta}_i))$ for some $(A, Q_i) \in \VSLAM_n(3)$.
Observe that if ${\mr{P}}^{-1} \mr{\eta}_j \in \RP^3_0$ for some $j$, then $P^{-1} \eta_j \in \RP^3_0$ also, independent of the particular element $(A, Q_i)$.
To overcome this, in the remainder of the paper it is assumed that there is never a $j$ such that ${\mr{P}}^{-1} \mr{\eta}_j \in \RP^3_0$.
This assumption is reasonable, in that it is equivalent to assuming there are no landmarks coinciding precisely with the origin of the robot.
Additionally, it is assumed that the type of each landmark (point or bearing) is known, and the landmarks are enumerated such that $i = 1,...,n_p$ and $i=n_p+1,...,n_p+n_b=n$ represent of point- and bearing-type landmarks respectively.
The reduced total space is defined as
\begin{align*}
\mr{\totT}_{n_p,n_j}(3) := \left\{ (P,\eta_i) \in \right. & \totT_{n_p+n_j}(3) \; \vline \; 1 \leq i \leq n_p \Leftrightarrow \eta_i \in \RP^3_p, \\
&\left.  1 \leq i-n_p \leq n_b \Leftrightarrow \eta_i \in \RP^3_p  \right\},
\end{align*}
and only elements $(P, \eta_i) \in \mr{\totT}_{n_p,n_j}(3)$ are considered from here going forward.

%
%

\subsection{Lift of the VSLAM Kinematics}
In order to consider the system on the $\VSLAM_n(3)$ group, the kinematics from the state space must be lifted onto the group.
The following lemma provides the lift function.

\begin{lemma}
The function $\lambda: \mr{\totT}_{n_p,n_j}(3) \times \vecV \to \vslam_n(3)$, defined by
\begin{align*}
\lambda((P, \eta_i), U) = (U, W(U, P^{-1}\eta_i)),
\end{align*}
where $W : \se(3) \times (\RP^3 \cup \RP^3_p) \to \sot(3)$ is given by
\begin{align*}
W\left( (\Omega_U, V_U), \begin{bmatrix}
q \\ r
\end{bmatrix} \right) = \left( \begin{matrix}
\left( \Omega_U - r \frac{V_U \times q}{|q|^2}\right)^\times & 0 \\ 0 & -r \frac{V_U^\top q}{|q|^2}
\end{matrix} \right),
\end{align*}
is a velocity lift of the kinematics \eqref{eq:input_function_f} onto $\VSLAM_n(3)$ with respect to the group action \eqref{eq:group_action_upsilon}.
\end{lemma}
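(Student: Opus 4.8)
The plan is to verify the defining identity of a velocity lift directly. Writing $E=(I_4,I_4)$ for the identity of $\VSLAM_n(3)$ and $\Upsilon_{(P,\eta_i)}:\VSLAM_n(3)\to\totT_n(3)$, $(A,Q_i)\mapsto\Upsilon((A,Q_i),(P,\eta_i))$, for the partial action with the state held fixed, I must show
\[
\tD\Upsilon_{(P,\eta_i)}(E)\left[\lambda((P,\eta_i),U)\right]=f((P,\eta_i),U)=(PU,0)
\]
for every $(P,\eta_i)\in\mr{\totT}_{n_p,n_j}(3)$ and $U\in\vecV$. Because the pose slot of $\Upsilon$ depends only on $A$ and each landmark slot depends only on $A$ and the corresponding $Q_i$, this identity decouples into one equation for the pose and $n$ independent equations for the landmarks, all driven by the common $U$. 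As a preliminary I would note that $W$ is well-defined on $\RP^3$-classes, since rescaling $[q;r]\mapsto[aq;ar]$ leaves both $r\,\tfrac{V\times q}{|q|^2}$ and $r\,\tfrac{V^\top q}{|q|^2}$ invariant, and that on the reduced space the excluded orbit $\RP^3_0$ is exactly $q=0$, so the denominators never vanish.

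Next I would compute the infinitesimal generator by differentiating along $A(t)=\exp(tU)$ and $Q_i(t)=\exp(tW_i)$ with $W_i=W(U,P^{-1}\eta_i)$. The pose slot $PA(t)=P\exp(tU)$ differentiates at once to $PU$, matching the first component of $f$. For a landmark, using $A(0)=Q_i(0)=I_4$ and $\left.\ddt\right|_0 Q_i(t)^{-1}=-W_i$, the product rule yields the ambient derivative
\[
\left.\ddt\right|_{0}PA(t)Q_i(t)^{-1}P^{-1}\eta_i=P(U-W_i)P^{-1}\eta_i.
\]
The crucial observation is that this is a representative in $\R^4$ of a vector in $T_{\eta_i}\RP^3$, and such a tangent vector is zero precisely when the representative is proportional to the base point. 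Writing $\eta'_i=P^{-1}\eta_i=[q;r]$, matching the zero second component of $f$ therefore amounts to proving $(U-W_i)[q;r]\simeq[q;r]$.

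The core step is the block evaluation of $U-W_i$. By linearity of $(\cdot)^\times$ the two $\so(3)$-blocks combine as $\Omega^\times-(\Omega-r\tfrac{V\times q}{|q|^2})^\times=r\,\tfrac{(V\times q)^\times}{|q|^2}$, where $\Omega,V\in\R^3$ are the angular and linear velocities of $U$, so that
\[
U-W_i=\begin{pmatrix} r\,\dfrac{(V\times q)^\times}{|q|^2} & V \\[1mm] 0 & r\,\dfrac{V^\top q}{|q|^2}\end{pmatrix}.
\]
Applying this to $[q;r]$ and invoking the double cross-product identity $(V\times q)\times q=(q^\top V)q-|q|^2V$ (equivalently \eqref{eq:projector-skew-identity}), the top block collapses to $r\,\tfrac{(V\times q)^\times q}{|q|^2}+rV=r\,\tfrac{(q^\top V)q}{|q|^2}$ while the bottom block is $r^2\tfrac{V^\top q}{|q|^2}$, giving
\[
(U-W_i)\begin{bmatrix} q \\ r \end{bmatrix}=r\,\frac{q^\top V}{|q|^2}\begin{bmatrix} q \\ r \end{bmatrix}.
\]
This is a scalar multiple of $\eta'_i$, hence $P(U-W_i)P^{-1}\eta_i\simeq\eta_i$ represents the zero vector in $T_{\eta_i}\RP^3$, which matches the second component of $f$ and completes the verification. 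I expect the main obstacle to be exactly this cancellation: the two terms defining $W$ are engineered so that the cross-product identity annihilates the ambient landmark motion modulo the $\RP^3$ scaling direction, and the argument succeeds only if \emph{zero velocity} is read in the quotient $\RP^3$ rather than naively in $\R^4$.
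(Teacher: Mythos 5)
Your proof is correct and verifies the same defining identity as the paper — $\tD\Upsilon_{(P,\eta_i)}(\id)[\lambda((P,\eta_i),U)] = f((P,\eta_i),U)$, preceded by the scale-invariance check on $W$, which is the paper's computation \eqref{eq:velocity-lift-scale-invariance} verbatim — but you close the landmark equation by a noticeably different computational route. The paper encodes ``zero tangent vector'' via the horizontal projector, as in \eqref{eq:lift_condition}, fixes the normalization $|q_i| = 1$, and massages $\ob{\Pi}_{\eta_i} P U P^{-1}\eta_i$ step by step into $\ob{\Pi}_{\eta_i} P W_i P^{-1}\eta_i$, at one point inserting a multiple of the base point which the projector annihilates. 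You instead form the difference $U - W_i$ as an explicit block matrix, apply it to a representative, and use the double cross-product identity (equivalently \eqref{eq:projector-skew-identity}) to get $(U - W_i)\begin{pmatrix} q \\ r \end{pmatrix} = \frac{r (q^\top V_U)}{|q|^2}\begin{pmatrix} q \\ r \end{pmatrix}$, a vector proportional to the base point, i.e.\ vertical for the quotient $\R^4\setminus\{0\} \to \RP^3$ and hence the zero element of $T_{[\eta_i]}\RP^3$. The two closures are logically equivalent (the horizontal part vanishes if and only if the ambient derivative is proportional to the base point), and both hinge on the same identity $\Pi_q = -q^\times q^\times / |q|^2$; what your organization buys is cleaner bookkeeping — no unit-norm normalization, no inserted correction term under the projector, and the explicit scalar $\frac{r(q^\top V_U)}{|q|^2}$ exhibiting exactly the fiber velocity that $W$ is engineered to absorb — while the paper's projector-based formulation keeps the notation that is reused later in the observer analysis of Theorem \ref{th:observer}.
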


\begin{proof}
To show that $\lambda$ is a velocity lift, it is required that
\begin{align*}
\tD \Upsilon_{(P, \eta_i)}(\id) \left[ \lambda((P, \eta_i), U) \right] = f((P, \eta_i), U).
\end{align*}
Equivalently, it is required to show that
\begin{align} \label{eq:lift_condition}
\left( PU, \ob{\Pi}_{\eta_i} \left(PUP^{-1} - PW_iP^{-1} \right) \eta_i \right) = (PU, 0),
\end{align}
where $W_i := W(U, P^{-1}\eta_i)$.

First, it is necessary to show that $W$ is well-defined whenever $q \neq \bf{0}$.
To see this, let $a \in \R$ be any non-zero scalar, and observe that
\begin{align} \label{eq:velocity-lift-scale-invariance}
&W\left( U, \begin{bmatrix}
aq \\ ar
\end{bmatrix} \right) =\left( \begin{matrix} \left( \Omega_U - ar \frac{V_U \times (aq)}{|aq|^2}\right)^\times & 0 \\ 0 & -ar \frac{V_U^\top (aq)}{|aq|^2}\end{matrix} \right), \notag \\
& \hspace{1cm} =\left( \begin{matrix} \left( \Omega_U - r \frac{V_U \times q}{|q|^2}\right)^\times & 0 \\ 0 & -r \frac{V_U^\top q}{|q|^2} \end{matrix} \right) = W\left( U, \begin{bmatrix} q \\ r \end{bmatrix} \right).
\end{align}

Recalling the expression for $f$ provided in \eqref{eq:input_function_f}, it is clear that the first terms on both sides of \eqref{eq:lift_condition} are equal.
Let
\begin{align} \label{eq:qr-def}
\left[ \begin{matrix}
q_i \\ r_i
\end{matrix} \right] := P^{-1} \eta_i.
\end{align}
In order to aid in the readability of the following equations, $q_i$ and $r_i$ in \eqref{eq:qr-def} are chosen such that $\vert q_i \vert = 1$.
However, it is important to note this choice is arbitrary as shown in \eqref{eq:velocity-lift-scale-invariance}.
To show \eqref{eq:lift_condition}, consider that
\begin{align*}
&\ob{\Pi}_{\eta_i} PUP^{-1}\eta_i = \ob{\Pi}_{\eta_i} PU \left[ \begin{matrix}
q_i \\ r_i
\end{matrix} \right], \\
&\hspace{0.5cm} = \ob{\Pi}_{\eta_i} P \left[ \begin{matrix}
\Omega_U^\times q_i + r_i V_U \\ 0
\end{matrix} \right], \\
&\hspace{0.5cm} = \ob{\Pi}_{\eta_i} \left[ P \left( \begin{matrix}
\Omega_U^\times q_i + r_i V_U \\ 0
\end{matrix} \right) - r_i V_U^\top q_i P\left( \begin{matrix}
q_i \\ r_i
\end{matrix} \right) \right], \\
&\hspace{0.5cm} = \ob{\Pi}_{\eta_i} P \left[ \begin{matrix}
\Omega_U^\times q_i + r_i \left( I_3 - q_i q_i^\top \right) V_U \\ -r_i V_U^\top q_i r_i
\end{matrix} \right].
\\
&\hspace{0.5cm} = \ob{\Pi}_{\eta_i} P \left[ \begin{matrix}
\Omega_U^\times q_i - r_i q_i^\times q_i^\times V_U \\ -r_i V_U^\top q_i r_i
\end{matrix} \right],
\end{align*}
using the identity \eqref{eq:projector-skew-identity}. This further reduces to
\begin{align*}
\ob{\Pi}_{\eta_i} PUP^{-1}\eta_i &= \ob{\Pi}_{\eta_i} P \left[ \begin{matrix}
\Omega_U^\times q_i - r_i (V_U^\times q_i)^\times q_i  \\ -r_i V_U^\top q_i r_i
\end{matrix} \right], \\
&= \ob{\Pi}_{\eta_i} P \left( \begin{matrix} \left( \Omega_U - r_i V_U \times q_i \right)^\times & 0 \\ 0 & -r_i V_U^\top q_i \end{matrix} \right) \left[ \begin{matrix}
q_i \\ r_i
\end{matrix} \right], \\
&= \ob{\Pi}_{\eta_i} PW_i P^{-1} \eta_i,
\end{align*}
where the last step follows from \eqref{eq:qr-def} and the choice of $\vert q_i \vert = 1$.
From here,  \eqref{eq:lift_condition} clearly resolves to
\begin{align*}
D\Upsilon_{(P, \eta_i)}(\id) \left[ (U, W_i) \right] & = (PU, 0) \\
& = f((P, \eta_i), U),
\end{align*}
as required.
This completes that proof that $\lambda$ is a velocity lift.
\end{proof}

The kinematics of the true state $\xi = (P, \eta_i) \in \mr{\totT}_{n_p,n_j}(3)$ of the VSLAM system are given by
\begin{align} \label{eq:true_system_state}
\dot{\xi} &= f(\xi, U).
\end{align}
Choose a reference configuration $\mr \xi = (\mr P, \mr{\eta}_i) \in \mr{\totT}_{n_p,n_j}(3)$.
By construction, the trajectories of the lifted system kinematics
\begin{align*}
\dot{X} = X \lambda(\Upsilon(X, \mr{\xi}), U)
\end{align*}
project to trajectories of the VSLAM kinematics \eqref{eq:true_system_state} via $\xi(t) = \Upsilon(X(t), \mr{\xi})$.

\section{Observer Design} \label{sec:observer}

\subsection{Observer Kinematics}
%
%
Define the observer state to lie on the VSLAM group, $\hat{X} = (\hat{A}, \hat{Q}_i) \in \VSLAM_n(3)$, with kinematics given by
\begin{align} \label{eq:group_observer_state}
\frac{\td}{\td t} \hat{X} &= \hat{X} \lambda(\Upsilon(\hat{X}, \mr \xi), U) + \hat{X} \Delta_{\hat{X}}, \notag \\
\hat{X}(0) &= \id,
\end{align}
where $\Delta_{\hat{X}} = (\Delta_{\hat{A}}, \Delta_{\hat{Q}_i}) \in \vslam_n(3)$ is an innovation term.
The estimated state $\hat{\xi} = (\hat{P}, \hat{\eta}_i) \in \mr{\totT}_{n_p,n_j}(3)$ is given by
\begin{align} \label{eq:estimated_system_state}
\hat{\xi} = \Upsilon(\hat{X}, \mr \xi).
\end{align}
Additional notation is helpful in simplifying the expressions that follow in the observer design. Define
\begin{align} \label{eq:y-measurement-def}
\hat{y}_i := h((\hat{P}, \hat{\eta}_i)),
\hspace{1cm} y_i := h(({P}, {\eta}_i)).
\end{align}
All expressions above are well-defined for equivalence classes in the SLAM manifold.

\subsection{Landmark Observer}
\begin{theorem} \label{th:observer}
Let $\xi = (P, \eta_i) \in \mr{\totT}_{n_p,n_j}(3)$ be the true state of the system, evolving with the kinematics \eqref{eq:true_system_state}. Let $\mr{\xi} \in \mr{\totT}_{n_p,n_j}(3)$ be arbitrary up to the requirement that, for all $i$, $\mr{\eta}_i$ and $\eta_i$ are members of the same orbit of $\RP^3$ under the action of $\SOT(3)$. Define $\hat{X} = (\hat{A}, \hat{Q}_i) \in \VSLAM_n(3)$ to be the observer state with kinematics defined by \eqref{eq:group_observer_state}, and define $\hat{\xi} = (\hat{P}, \hat{\eta}_i)$ as in \eqref{eq:estimated_system_state}.

Now, for $i = 1,..., n_p$, define $\Sigma_i \in \R^{3\times 3}$ by
\begin{align} \label{eq:Riccati Dynamics}
\dot{\Sigma}_i &= \Sigma_i \Omega_U^\times - \Omega_U^\times \Sigma_i + H_i - \Sigma_i \Pi_{y_i} G_i \Pi_{y_i} \Sigma_i, \notag \\
\Sigma(0)_i &= \Sigma_{i,0} > 0, \ G_i = k_G I_3, \ H_i = k_H I_3,
\end{align}
where $k_G,k_H > 0$ are constants, and assume that there exist $\delta > 0$ and $\mu > 0$ such that
\begin{align} \label{eq:riccati-observability-controllability}
\frac{1}{\delta} \int_t^{t+\delta} \Pi_{R_P(s)y_i(s)} ds &\geq \mu I_3.
\end{align}
for any time $t >0$ and for any $i=1,...,n_p$.
For $i = n_p+1,..., n_p+n_b$, define
\begin{align} \label{eq:bearing-landmark-no-riccati}
\Sigma_i \equiv I_3, \ G_i &= I_3, \ H_i = I_3.
\end{align}
Then, for every landmark $i=1,...,n_p+n_b$, define $\Delta_{\hat{Q}_i}$ as
\begin{align} \label{eq:landmark-innovation}
\Delta_{\hat{Q}_i} &= \left( \begin{matrix}
\left( \hat{y}_i^\times K_i \Pi_{y_i} \hat{y}_i \right)^\times & 0 \\ 0 & - \hat{y}_i^\top K_i \Pi_{y_i} \hat{y}_i
\end{matrix} \right), \notag \\
K_i &= k \Sigma_i \Pi_{y_i} G_i, \hspace{1cm} k > 0.5,
\end{align}
where $y_i$ and $\hat{y}_i$ are given by \eqref{eq:y-measurement-def}.
Let the innovation term $\Delta_{\hat{A}}$ be given by the least-squares solution to
\begin{align} \label{eq:robot-innovation-cost}
&\min_{(\Delta_{\hat{R}}, \Delta_{\hat{x}})} \sum_{i=1}^n \left\vert \frac{1}{\vert \hat{\theta}_i \vert} \ob{\Pi}_{\hat{\theta}_i} \left( \left( \begin{matrix}
-(\hat{\theta}_i^{1:3})^\times & \hat{\theta}_i^4 I_3 \\ 0 & 0
\end{matrix} \right) \left( \begin{matrix}
\Delta_{\hat{R}} \\ \Delta_{\hat{x}}
\end{matrix} \right) + \Delta_{\hat{Q}_i} \hat{\theta}_i \right) \right\vert, \notag \\
&\hspace{1.5cm}\Delta_{\hat{A}} = \left( \begin{matrix}
\Delta_{\hat{R}}^\times & \Delta_{\hat{x}} \\ 0 & 0
\end{matrix} \right),
\hspace{1cm} \hat{\theta}_i := \hat{P}^{-1} \hat{\eta}_i.
\end{align}
Then the estimated state coordinates $\hat{\xi}$ converge to the true coordinates $\xi$ almost-globally asymptotically and exponentially in the large\footnote{For any compact set in the basin of attraction of the equilibrium, the value of the Lyapunov function converges exponentially to zero.} up to equivalence on the SLAM manifold $\calM_n(3)$.
\end{theorem}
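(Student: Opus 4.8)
The plan is to transfer the problem into body-fixed landmark coordinates, in which convergence on the quotient $\calM_n(3)$ becomes transparent. Define the true and estimated body-frame landmarks $\theta_i := P^{-1}\eta_i$ and $\hat{\theta}_i := \hat{P}^{-1}\hat{\eta}_i$ (the latter already appears in \eqref{eq:robot-innovation-cost}); by the invariance computation \eqref{eq:welldef-of-body-coordinates} these are well defined on the SLAM manifold, so it suffices to show $\hat{\theta}_i \to \theta_i$ in $\RP^3$ for every $i$. First I would differentiate $\hat{\theta}_i$ along the observer kinematics \eqref{eq:group_observer_state} and $\theta_i$ along the true kinematics \eqref{eq:true_system_state}. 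Because $\lambda$ is a genuine velocity lift of $f$, the input-driven drift is identical on both sides and cancels in the error, leaving a system driven purely by the innovations $\Delta_{\hat{A}}$ and $\Delta_{\hat{Q}_i}$. For a point-type landmark the resulting tangent-space error $\tilde{\theta}_i$ obeys a linear time-varying system whose drift matrix is $\Omega_U^\times$ and whose output map is $\Pi_{y_i}$ — exactly the structure presupposed by the Riccati dynamics \eqref{eq:Riccati Dynamics}.

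Second I would build a decoupled Lyapunov function $\Lyap = \sum_i \Lyap_i$. For $i \le n_p$ I take the Riccati-weighted quadratic $\Lyap_i \sim \tilde{\theta}_i^\top \Sigma_i^{-1}\tilde{\theta}_i$, and for the bearing landmarks of \eqref{eq:bearing-landmark-no-riccati} a geodesic cost on $\RP^2$. Differentiating and substituting the inverse Riccati flow, which \eqref{eq:Riccati Dynamics} yields as $\tfrac{\td}{\td t}\Sigma_i^{-1} = \Sigma_i^{-1}\Omega_U^\times - \Omega_U^\times\Sigma_i^{-1} - \Sigma_i^{-1}H_i\Sigma_i^{-1} + \Pi_{y_i}G_i\Pi_{y_i}$, together with the innovation \eqref{eq:landmark-innovation}, the drift and cross terms recombine into a sign-definite form. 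Here the hypothesis $k > 0.5$ is precisely the gain margin needed for the innovation to dominate the residual, giving $\dot{\Lyap}_i \le 0$ with equality only when $\Pi_{y_i}\hat{y}_i = 0$.

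Third, to promote this to the claimed uniform and exponential-in-the-large behaviour I would invoke the persistence-of-excitation bound \eqref{eq:riccati-observability-controllability}. This is a uniform-observability condition, and by the standard Riccati-observer estimates it guarantees that each $\Sigma_i$ stays bounded above and uniformly positive definite, hence $\Sigma_i^{-1}$ is likewise sandwiched. Consequently $\Lyap$ is equivalent to $\sum_i |\tilde{\theta}_i|^2$ and admits a strict inequality $\dot{\Lyap} \le -c\,\Lyap$ with $c>0$ uniform on any compact subset of the basin, which is exactly the exponential-in-the-large conclusion of the footnote.

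The hardest parts, where I expect to spend the most care, are twofold. The pose innovation $\Delta_{\hat{A}}$ is specified only implicitly, as the least-squares solution of \eqref{eq:robot-innovation-cost}, and it is \emph{shared} by all landmarks, so it couples the otherwise decoupled per-landmark error terms; I would have to show that inserting this minimiser into $\dot{\Lyap}$ still leaves the total non-positive, which amounts to proving that the least-squares fit is the orthogonal projection of the aggregate landmark correction onto the $\se(3)$-admissible pose directions and therefore can only reduce $\Lyap$ — making this precise requires identifying the normal equations of \eqref{eq:robot-innovation-cost} with the pose-gradient of $\Lyap$. The second delicate point is upgrading local to \emph{almost-global} asymptotic stability: the stationarity set $\Pi_{y_i}\hat{y}_i = 0$ contains, besides the desired equilibrium, degenerate configurations dominated by the rotational component of the pose error (the familiar rotation-by-$\pi$ critical points). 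I would show via a Hessian computation that each such point is a saddle with a nontrivial unstable manifold, so that the union of their stable manifolds is a set of measure zero; the basin of attraction of the true state is then the whole reduced total space minus this measure-zero set, completing the almost-global claim.
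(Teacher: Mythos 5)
Your scaffolding (body-frame coordinates $\hat{\theta}_i = \hat{P}^{-1}\hat{\eta}_i$, a decoupled Lyapunov sum with Riccati-weighted quadratics for point landmarks and a direction cost for bearing landmarks, and the persistence-of-excitation condition \eqref{eq:riccati-observability-controllability} to sandwich $\Sigma_i$) is the same as the paper's. However, the step you single out as the hardest --- controlling the coupling allegedly introduced by the shared pose innovation $\Delta_{\hat{A}}$ --- rests on a false premise, and the work you plan there cannot succeed as described. From the group action \eqref{eq:group_action_upsilon} and the definition \eqref{eq:estimated_system_state} of the estimate, $\hat{P} = \mr{P}\hat{A}$ and $\hat{\eta}_i = \mr{P}\hat{A}\hat{Q}_i^{-1}\mr{P}^{-1}\mr{\eta}_i$, hence
\begin{equation*}
\hat{\theta}_i = \hat{P}^{-1}\hat{\eta}_i = \hat{Q}_i^{-1}\,\mr{P}^{-1}\mr{\eta}_i ,
\end{equation*}
so $\hat{A}$ cancels identically. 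The body-frame landmark estimates, and therefore every storage function $l_i$ and the total $\Lyap$, depend only on the $\hat{Q}_i$; the error system is \emph{not} ``driven by $\Delta_{\hat{A}}$ and $\Delta_{\hat{Q}_i}$'' --- $\Delta_{\hat{A}}$ never enters it, and the per-landmark error dynamics $\dot{e}_i = -\Omega_U^\times e_i - K_i \Pi_{y_i} e_i$ are exactly decoupled. This is precisely the structural payoff of the $\VSLAM_n(3)$ symmetry. Consequently your proposed resolution --- identifying the normal equations of \eqref{eq:robot-innovation-cost} with ``the pose-gradient of $\Lyap$'' --- is vacuous: $\Lyap$ has no dependence on the pose estimate, so there is no such gradient, and no non-positivity of a pose contribution to prove. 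The only fact the proof requires about $\Delta_{\hat{A}}$ is that the cost \eqref{eq:robot-innovation-cost} is invariant to rescalings $\hat{\theta}_i \mapsto a_i \hat{\theta}_i$, so the minimiser is well defined on $\RP^3$ data; $\Delta_{\hat{A}}$ then only selects which representative of the equivalence class $\lfloor P, \eta_i \rfloor$ the observer tracks, which is immaterial to the theorem's conclusion, stated up to equivalence on $\calM_n(3)$.

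Your plan for the almost-global claim is misdirected for the same reason. Because the errors decouple per landmark, there are no ``rotation-by-$\pi$'' pose critical points to classify. Point-type landmarks converge \emph{globally} exponentially: the Riccati storage function satisfies $\dot{l}_i \leq -\frac{1}{2}\frac{\sigma_{i,m}^2}{\sigma_{i,M}} k_H\, l_i$ with no exceptional set at all. The only obstruction comes from bearing-type landmarks, for which the storage function \eqref{eq:lyapunov-bearing-landmark} obeys a scalar equation of the form $\dot{l}_i = -k\,(y_i^\top \hat{y}_i)^2\, l_i$ (for unit representatives), which is strictly negative except when $\hat{y}_i$ and $y_i$ are parallel (the equilibrium $l_i = 0$) or orthogonal ($l_i$ at its maximum). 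This immediately yields convergence from every initialisation in which no bearing estimate is orthogonal to its true bearing, with a uniform exponential rate on every sublevel set bounded away from the maximum --- which is exactly the ``exponentially in the large'' statement, and explains why no uniform global rate is claimed. The excluded set is the finite union, over bearing landmarks only, of the orthogonality sets $\{\,y_i(0)^\top \hat{y}_i(0) = 0\,\}$, which is closed and of measure zero; no Hessian computation or stable-manifold argument is needed. Note finally that your characterisation of the critical set is wrong on its own terms: $\dot{l}_i$ vanishes not only where $\Pi_{y_i}\hat{y}_i = 0$ (i.e.\ $l_i = 0$) but also on the orthogonality set, and it is the latter, not saddle points of a pose error, that must be excised for almost-global stability.
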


\begin{proof}
To verify that $\Delta_{\hat{A}}$ is well-defined note that the cost in \eqref{eq:robot-innovation-cost} is invariant to scale in the data $\hat{\theta}_i \mapsto a_i \hat{\theta}_i$ for $a_i \in \R \setminus \{ 0 \}$.
A Lyapunov analysis proves the desired result.

For $i=1,...,n_p$, recalling \eqref{eq:inverse-rp3-map-gamma}, define the error coordinates and candidate storage function as
\begin{align*}
e_i &:= \gamma(\hat{P}^{-1} \hat{\eta}_i) - \gamma(P^{-1} \eta_i), \\
l_i &:= \frac{1}{2} e_i^\top \Sigma_i^{-1} e_i,
\end{align*}
respectively.
The condition \eqref{eq:riccati-observability-controllability} ensures that $\Sigma_i$ is well-conditioned, and remains bounded and positive-definite for all time $t \geq 0$ \cite{2016_Hamel_cdc}.
Therefore the candidate storage function $l_i$ is positive definite.
It remains to show that $l_i$ is monotonically decreasing.
The kinematics of $e_i$ are
\begin{align*}
\dot{e}_i &= -\Omega_U^\times e_i - K_i \Pi_{y_i} e_i.
\end{align*}
Differentiating the candidate storage function, one has
\begin{align*}
\dot{l}_i &= e_i^\top \Sigma_i^{-1} \dot{e}_i - \frac{1}{2} e_i^\top \Sigma_i^{-1} \dot{\Sigma}_i \Sigma_i^{-1} e_i, \\
&= e_i^\top \Sigma_i^{-1} (-\Omega_U^\times e_i - k \Sigma_i \Pi_{y_i} G_i \Pi_{y_i} e_i) - \frac{1}{2} e_i^\top \Sigma_i^{-1} (\Sigma_i\Omega_U^\times \\
& \hspace{0.5cm} - \Omega_U^\times \Sigma_i + H_i - \Sigma_i \Pi_{y_i} G \Pi_{y_i} \Sigma_i) \Sigma_i^{-1} e_i, \\
&= -\frac{1}{2} e_i^\top \Sigma_i^{-1} \Omega_U^\times e_i -\frac{1}{2} e_i^\top \Omega_U^\times \Sigma_i^{-1} e_i \\
& \hspace{0.5cm} + \left( \frac{1}{2} - k \right) e_i^\top \Pi_{y_i} G \Pi_{y_i} e_i - \frac{1}{2} e_i^\top \Sigma_i^{-1} H \Sigma_i^{-1} e_i, \\
&\leq - \frac{1}{2} e_i^\top \Sigma_i^{-1} H_i \Sigma_i^{-1} e_i, \\
&\leq - \frac{1}{2} \frac{\sigma_{i,m}^2}{\sigma_{i,M}} k_H l_i,
\end{align*}
where $\sigma_{m,i}$ and $\sigma_{M,i}$ denote the infinum of the smallest and the supremum of the largest eigenvalues of $\Sigma_i$ over time, respectively.
Since $k_H > 0$ is chosen as a constant, and $\Sigma_i$ remains well-conditioned and bounded, the equilibrium $e_i = 0$ is exponentially stable.
Equivalently, this provides that $\hat{P}^{-1} \hat{\eta}_i \to P^{-1} \eta_i$ globally exponentially.

For $i=n_p+1,...,n_p+n_b$, define the candidate storage function
\begin{align} \label{eq:lyapunov-bearing-landmark}
l_i &:= \frac{1}{2} \left( 1 - \left( \frac{y_i^\top \hat{y}_i}{|y_i||\hat{y}_i|} \right)^2 \right).
\end{align}
Observe that $l_i$ is well-defined as a function of $\RP^2$ elements, since the expression is invariant to multiplication of $y_i$ or $\hat{y}_i$ by any non-zero scalar.
Clearly $l_i$ is positive definite.
The kinematics of the bearing $y_i \in \RP^2$ are given by
\begin{align*}
\dot{y}_i &= \frac{\td}{\td t} \begin{pmatrix} I_3 & \bf{0} \end{pmatrix} P^{-1} \eta_i, \\
&= - \begin{pmatrix} I_3 & \bf{0} \end{pmatrix} \ob{\Pi}_{P^{-1} \eta_i} W_i P^{-1} \eta_i, \\
&= - \Pi_{y_i} \Omega_U^\times \begin{pmatrix} I_3 & \bf{0} \end{pmatrix} P^{-1} \eta_i, \\
&= - \Omega_U^\times y_i.
\end{align*}
This is well-defined as an element of the tangent space $T_{y_i}\RP^2$ since any scaling of $y_i$ results in the same scaling of the expression for $\dot{y}_i$.
Since $\dot{y}_i^\top y_i = 0$, the dynamics of the norm of any chosen representative of $y_i$ are given by $\frac{\td}{\td t} \vert y_i \vert = 0$.
Analogously, recalling \eqref{eq:bearing-landmark-no-riccati} and \eqref{eq:landmark-innovation}, the kinematics of $\hat{y}_i \in \RP^2$ are given by
\begin{align*}
\dot{\hat{y}}_i &= (-\Omega_U^\times - \left( \hat{y}_i^\times K_i \Pi_{y_i} \hat{y}_i \right)^\times) \hat{y}_i, \\
&= -\Omega_U^\times \hat{y}_i + \hat{y}_i^\times \hat{y}_i^\times (k \Sigma_i \Pi_{y_i} G_i) \Pi_{y_i} \hat{y}_i, \\
&= -\Omega_U^\times \hat{y}_i - k \Pi_{\hat{y}_i} \Pi_{y_i} \hat{y}_i,
\end{align*}
and hence the dynamics of the norm of any representative of $\hat{y}_i$ are given by $\frac{\td}{\td t} \vert \hat{y}_i \vert = 0$.
As a consequence of this and the scale invariance of \eqref{eq:lyapunov-bearing-landmark}, we may choose $|y_i| = |y_i| = 1$ for readability without loss of generality.
Differentiating the candidate storage function leads to
\begin{align*}
\dot{l}_i &= - (y_i^\top \hat{y}_i)(\dot{y}_i^\top \hat{y}_i + y_i^\top \dot{\hat{y}}_i), \\
&= k (y_i^\top \hat{y}_i) y_i^\top \Pi_{\hat{y}_i} \Pi_{y_i} \hat{y}_i , \\
&= k (y_i^\top \hat{y}_i)^2 ( (y_i^\top \hat{y}_i)^2 - 1) \\
&= -k (y_i^\top \hat{y}_i)^2 l_i
\end{align*}
which is negative definite as long as the initial directions $y_i(0)$ and $\hat{y}_i(0)$ are not orthogonal. There are two situations in which $\dot{l}_i = 0$. The first one corresponds to the stable case where $l_i = 0$ ($\hat{y}_i$ and $y_i$ are parallel) while the second one corresponds to the unstable case for which $l_i = 1$ ($\hat{y}_i$ and $y_i$ are orthogonal).
To prove the exponential stability in the large, suppose that $0 < l_i \leq \epsilon < 1$ for some fixed $\epsilon$. Then,
\begin{align*}
\dot{l}_i &= - k (y_i^\top \hat{y}_i)^2 l_i, \\
&= -k (1-l_i)l_i, \\
&\leq -k (1-\epsilon)l_i.
\end{align*}
Observe that, unless $l_i = 1$, such an $\epsilon$ can always be found.
Therefore, $l_i \to 0$  almost-globally asymptotically, and exponentially in the large.
Since the measurement function $h$ is invertible on bearing-type elements, this provides the desired result that $\hat{P}^{-1} \hat{\eta}_i \to P^{-1} \eta_i$ almost-globally asymptotically and exponentially in the large.

Define the whole-of-system Lyapunov function
\begin{align} \label{eq:lyapunov}
\Lyap := \sum_{i=1}^n l_i.
\end{align}
From the analysis of each individual $l_i$, it is clear that $\Lyap \to 0$ almost-globally asymptotically and exponentially in the large.
The convergence of each $\Lyap$ provides that
\begin{align*}
(\hat{P}, \hat{\eta}_i) &\simeq ((P \hat{P}^{-1}) \hat{P}, (P \hat{P}^{-1}) \hat{\eta}_i), \\
&= (P, P (\hat{P}^{-1} \hat{\eta}_i)), \\
& \to (P, P (P^{-1} \eta_i)), \\
&= (P, \eta_i),
\end{align*}
almost-globally asymptotically and exponentially in the large as well. This completes the proof.
\end{proof}

\section{Simulation Results} \label{sec:simulation}
To verify the observer derived in Theorem \ref{th:observer}, we conducted a simulation of a vehicle equipped with a single monocular camera, observing 4 point-type landmarks and 2 bearing-type landmarks as it moves through space.
The vehicle moves in a circular trajectory at a fixed height of 3 m.
The body-fixed velocity $U$ is fixed to be constant, with $\Omega_U = (0, 0, -0.5)^\top$ rad/s and $V_U = (1.5, 0, 0)$ m/s.
For simplicity, the camera frame is assumed to coincide with the body-fixed frame of the vehicle, which avoids the need for a separate computation to transform the body-fixed velocity into the camera frame.
Let the true state be $(P, \eta_i) \in \mr{\totT}_{n_p, n_b}(3)$.
The reference configuration is chosen as $\mr \xi = (I_4, \mr{\eta}_i)$, where
\begin{align*}
\mr{\eta}_i = \alpha \left( 2 \left( \frac{h(\eta_i)}{\vert h(\eta_i) \vert} + \epsilon_i \right) \right)
\end{align*}
where the $\epsilon_i$ terms represent errors in the initial measurements.
The observer is defined on $\VSLAM_n(3)$, with kinematics given by \eqref{eq:group_observer_state} and innovation terms given by Theorem \ref{th:observer}.
The initial conditions and gains for the observer are chosen as
\begin{align*}
\Sigma_i(0) = 25I_3, \ k_H = 0.5, \ k_G = 2.0, \ k = 1.0.
\end{align*}
The simulation was carried out by implementing the continuous time system with Euler integration using a time step of $dt = 0.02$ s.

Figure \ref{fig:lyapunov} shows the evolution of $\log_{10}(\Lyap)$, where $\Lyap$ is the Lyapunov function of the simulated system as defined in \eqref{eq:lyapunov}.
This clearly shows exponential convergence of the observer error dynamics.
Figure \ref{fig:trajectory} shows the evolution of the trajectory of the simulated system.
Since the estimated state only converges to the true state up to equivalence on the SLAM manifold $\calM_n(3)$, it is necessary to assign total space coordinates to the estimate to aid the comparison.
In Figure \ref{fig:trajectory} the choice of total space coordinates for the estimated state is made so that the final robot pose is aligned with that of the true state.
This shows that the landmarks have correctly converged to the true landmarks up to the SLAM manifold equivalence.

\begin{figure}[!htb] \centering
\subfloat[The evolution of $\log_{10}$ of the Lyapunov function $\Lyap$ \eqref{eq:lyapunov} with respect to time.]{ \centering
\includegraphics[width=0.4\linewidth]{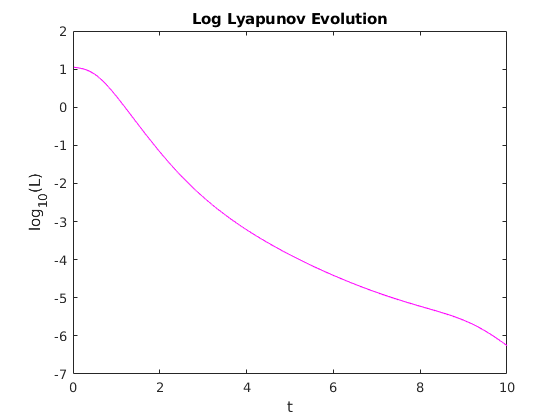}
\label{fig:lyapunov}}\hspace{0.04\linewidth}%
\subfloat[The trajectory of the simulated system (green, blue) compared with the true system evolution (black, red).]{ \centering
\includegraphics[width=0.4\linewidth]{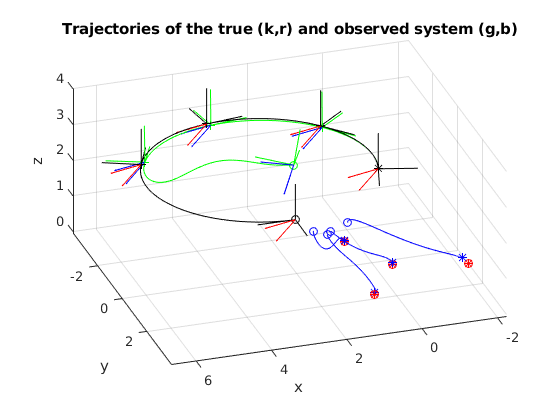}
\label{fig:trajectory}
}
\end{figure}


\section{Conclusion}
This paper presents an observer design posed on a novel symmetry group for the visual SLAM problem.
The total space and SLAM manifold conceptualised in \cite{2017_Mahony_cdc} have been extended to include free vectors.
The development of the symmetry group $\VSLAM_n(3)$ has allowed both point-type and bearing-type landmarks to be treated in a unified framework.
Riccati observers were incorporated for each of the point-type landmarks, and grant the user refined control over their convergence.
The almost-global convergence of the proposed observer on both point-type and bearing-type landmarks is a contrast to many state-of-the-art Extended Kalman Filter systems, which suffer from linearisation errors.
While research into the development of non-linear observers for the SLAM problem is only recent, the observer for visual SLAM presented in this paper demonstrates some of the key advantages the approach can offer.

\section*{Acknowledgment}

This research was supported by the Australian Research Council
through the ``Australian Centre of Excellence for Robotic Vision'' CE140100016.

\bibliographystyle{plain}
\bibliography{references}

\end{document}